\theoremstyle{plain}
\newtheorem{theorem}{Theorem}[section]
\newtheorem{lemma}[theorem]{Lemma}
\theoremstyle{definition}
\theoremstyle{remark}
\DeclareMathOperator*{\argmax}{arg\,max}
\DeclareMathOperator*{\argmin}{arg\,min}
\begin{document}

\twocolumn[
\styletitle{Generative Cooperative Networks for Natural Language Generation }

\stylesetsymbol{equal}{*}

\begin{styleauthorlist}
\styleauthor{Sylvain Lamprier}{equal,isir}
\styleauthor{Thomas Scialom}{equal,isir,recital}
\styleauthor{Antoine Chaffin}{irisa,imatag}
\styleauthor{Vincent Claveau}{irisa,cnrs}
\styleauthor{Ewa Kijak}{irisa}
\styleauthor{Jacopo Staiano}{recital}
\styleauthor{Benjamin Piwowarski}{isir,cnrs}
\end{styleauthorlist}

\styleaffiliation{isir}{ISIR - Sorbonne Université, Paris, France}
\styleaffiliation{recital}{ReciTAL, Paris, France}
\styleaffiliation{imatag}{IMATAG, Rennes, France}
\styleaffiliation{irisa}{IRISA, Rennes, France}
\styleaffiliation{cnrs}{CNRS}

\stylecorrespondingauthor{Sylvain Lamprier}{lamprier@isir.upmc.fr}

\stylekeywords{Machine Learning, style}

\vskip 0.3in
]

\printAffiliationsAndNotice{\styleEqualContribution} 

\begin{abstract}
Generative Adversarial Networks (GANs) have known a  tremendous success for many continuous generation tasks, especially in the field of image generation.  However, for discrete outputs such as  language,  optimizing GANs remains an open problem with many instabilities, as no gradient can be properly back-propagated from the discriminator output to the generator parameters.  An alternative is to learn the generator network via reinforcement learning, using the discriminator signal as a reward, but such a technique suffers from moving rewards and vanishing gradient problems. Finally, it often falls short compared to direct maximum-likelihood approaches. 
In this paper, we introduce Generative Cooperative Networks, in which the discriminator architecture is cooperatively used along with the generation policy to output samples of realistic texts for the task at hand.  We give theoretical guarantees of convergence for our approach, and study various efficient decoding schemes to empirically achieve state-of-the-art results in two main NLG tasks.

\end{abstract}

\section{Introduction}

Generative Adversarial Networks (GANs) \cite{gan} have known a tremendous success for many generation tasks. In GANs, a \textit{discriminator} network is trained to distinguish real data from fake ones, the latter being generated via a \textit{generator} network trained to fool the discriminator. Both networks are trained as a min-max two-player game, which is referred to as adversarial training. Under some strong assumptions, \cite{gan} gives theoretical guarantees of convergence of the generator towards the distribution underlying observed training data. Empirically, in continuous domains such as for image generation, these architectures have shown impressive abilities to generate realistic -- unobserved -- data, and have been extensively studied.

However, for discrete domains such as Natural Language Generation (NLG), optimizing GANs remains an open-problem, as no gradient-flow can be properly back-propagated from the discriminator output to the generator parameters. Language GANs require to be optimized via Reinforcement Learning (RL) methods, with rewards provided by discriminator networks \cite{de2019training}. Compared to classical NLG methods, such approaches have  the potential to 1)  avoid the well-known exposure bias plaguing the traditional MLE (teacher-forcing) training mode \cite{ranzato2015sequence} and 2) 
automatically discover useful metrics to optimize via RL  -- compared to manually designed ones \cite{ranzato2015sequence, paulus2017deep,answers}. 
Still, GAN-based approaches suffer from both high variance and non-stationary reward distributions, leading to many instabilities, and therefore usually fall short compared to traditional MLE approaches \cite{fallshort}.     

Theoretically sound attempts such as \cite{maligan} proposed to augment the discriminator scores with maximum-likelihood signals in order to stabilize rewards, but still suffer from high variance in practice \cite{fallshort}. Other attempts such as \cite{coldGAN} proposed to rather pick training samples close to  generative distribution modes to smooth the learning process and thus prevent abrupt changes in the reward function. However, this approach, if not employed with a carefully designed learning rate scheduler, still exhibits high instabilities when training until convergence -- making it harder to adapt the learning process for new NLG tasks or datasets.

Relying on the assumption that discrimination is easier  than generation, some recent approaches, such as \cite{das-global}, \cite{das-local} or \cite{selfgan}, have proposed to employ a cooperative decoding scheme where the discriminator network is used along with the generator to output more realistic samples. \cite{das-local} bias the standard beam search with scores provided by the discriminator network, to favor sequences that are classified as human-like texts. 
\cite{selfgan} builds upon that idea, but updates the generator at each step based on sequences generated from this augmented beam search process, in an expert-iteration \cite{expert-iteration} learning scheme. However, while such a kind of cooperative approach to produce accurate imitation learning samples 
is appealing, we argue in Section~\ref{gcn} that it may reveal particularly unstable.

To address these shortcomings, we propose to take inspiration from \cite{norouzi2016reward} which introduced \textit{Reward-augmented Maximum Likelihood} (RML), where samples to imitate are produced from a Boltzmann distribution $q(x)\propto exp(f(x)/\tau)$ with regard to a static quality metric $f(x)$ of any sample $x$ (e.g. the BLEU metric). Adapting this framework with more flexible learned -- i.e. GAN-like -- quality metrics, approaches such as \cite{selfgan} employ, at each step $t$ of the optimization process, metrics $f$ mainly depending on the current discriminator $D_t(x)$ for any sample $x$ at $t$. In this paper, we propose to rather consider $f(x)=\log(p_{t-1}(x) D_t(x))$ with $p_{t-1}$ the previous generator distribution and $D_t$ the discriminator at step $t$ (trained on samples from $p_{t-1}$), which allows us to avoid instability issues (notably due to possible catastrophic forgetting) and to present convergence guarantees under similar assumptions as those  considered in \cite{gan} for the continuous case. Then, we consider various efficient cooperative decoding approaches, which enable the practical optimization of such  training processes, mainly based on Monte-Carlo techniques and importance sampling.
 
Our contribution is threefold: 
\begin{itemize}
    \item We propose a novel formulation of GANs for the discrete setting, which exhibits unpublished theoretical convergence guarantees; 
    \item We propose practical efficient NLG training algorithms relying on these theoretical results, based on various sampling schemes and corresponding re-weightings;
    \item We present state-of-the-art results for two important NLG tasks: Abstractive Summarization and Question Generation.  
\end{itemize}

\section{Generative Cooperative Networks}
\label{gcn}

Let $p_d: {\cal Y} \rightarrow [0\,;1]$ be a target generative distribution, and assume we have access to training samples $y \sim p_d(y)$.  
The goal is to propose a training algorithm that converges towards $p_{T}(y) \approx p_d(y)$ after a given number of  $T$ iterations. 
In the following, we note $p_t: {\cal Y} \rightarrow [0;1]$ a generator distribution 
obtained at iteration $t$ of the algorithm, and $D_t: {\cal Y} \rightarrow [0\,;1]$ a discriminator 
that outputs the likelihood for an outcome $y \in {\cal Y}$ of having been generated from $p_d$ rather than from  $p_{t-1}$. 


A generic training process is given in Algorithm~\ref{algo:RMLGAN}, where $KL$ stands for the Kullback-Leibler divergence and $h$ is a composition function that outputs a sampling distribution $q_t$ from distributions given as its arguments. 
This training process unifies many different discrete GANs, (e.g., MaliGAN, SelfGAN, ColdGAN), 
as well as our present work, through the choice of 
function $h$ applied to the current discriminator $D_t$ and the previous generator $p_{t-1}$. 
Line 3 aims at finding the best possible discriminator $D_t$ given distributions $p_d$ and $p_{t-1}$, according to the classical objective to be maximized in GANs. Following the RML paradigm  introduced by \cite{norouzi2016reward}, line 4 seeks to optimize the generator distribution $p_t$ by considering the minimization of the KL divergence $KL(q_t||p_t)$, according to a fixed behavior distribution $q_t$ including feedback scores to be optimized (in our case, discriminator outputs). 
For cases where it is possible to efficiently sample from $q_t$, this is more efficient than considering  a more classical reinforcement learning objective implying the reversed $KL(p_t||q_t)$, usually subject to high variance  (e.g., via score function estimators). 

\begin{algorithm}
\caption{\textit{RML-GAN}}
\label{algo:RMLGAN}
\begin{algorithmic}[1]
\STATE \textbf{Input:} a generator $p_0 \in \cal G$, 
a discriminator family $\cal D$. 
\FOR{iteration $t$ from $1$  to $T$} 
\vspace{0.2cm}
    \STATE $D_t \leftarrow \arg\max\limits_{D \in {\cal D}} \begin{bmatrix} 
    \mathop{\mathbb{E}}\limits_{y \sim p_d(y)} \hspace{-0.3cm}[\log D(y)] \ +   \\  
    \mathop{\mathbb{E}}\limits_{y \sim p_{t-1}(y)}\hspace{-0.3cm}[\log (1-D(y))]
    \end{bmatrix}$  \\[0.4cm]
    \STATE $p_t \leftarrow \arg\min\limits_{p \in {\cal G}} KL(q_t=h(p_{t-1}, D_t) || p)$ 
\ENDFOR
\end{algorithmic}
\end{algorithm}

Let us consider a setting where $q_t\triangleq h(p_{t-1}, D_t)\propto exp(D_t)$, i.e. where the sampling distribution only considers outputs from the discriminator. This corresponds to a direct application of the work from \cite{norouzi2016reward} for the GAN setting. For the sake of analysis, we consider the case where, at a given step $t$, the generator distribution is optimal, i.e. $p_t = p_d$ over the whole support ${\cal Y}$. 
In the next step $t+1$, the optimal $D_{t+1}$ is equal to $0.5$ for any sample from $\cal Y$. 
In this case, optimizing $KL(q_{t+1}||p_{t+1})$ with $q_{t+1} \propto exp(D_{t+1})$ makes the generator diverge from the optimum $p_d$, forgetting all information gathered until that point.   
This shows that the direct adaptation of GAN to discrete outputs is fundamentally unstable.
%
While not exactly what is performed in approaches such as SelfGAN \cite{selfgan},\footnote{Since SelfGAN employs a pre-filter based on its generator to avoid complexity issues. } 
this extreme  setting illustrates instabilities that can occur with this family of recent state-of-the-art approaches. Discrimination cannot be all you need.  

Therefore, 
we rather propose to consider a slightly different and yet much smoother optimization scheme, where both the generator and the discriminator \textit{cooperate} to form the target distribution:  $q_t  \propto p_{t-1} D_t$. Such a choice for $q_t$ allows us to prove the following theorem, which gives theoretical convergence guarantees for our collaborative training process (proof given in Appendix~\ref{proof}).

\begin{theorem}
\label{theo1}
With $q_t \propto p_{t-1} D_t $, if the generator and discriminator architectures have enough capacity, and at each iteration of Algorithm~\ref{algo:RMLGAN} both optimization problems are allowed to reach their respective optimum  (i.e., 
    $D_t(y)=\frac{p_d(y)}{p_d(y)+p_{t-1}(y)}$ for any $y \in {\cal Y}$ (line 3) and 
    $KL(q_t \propto p_{t-1} D_t || p_t)=0$ (line 4)), 
then, starting from $p_0$ such that $p_0(y) > 0$ whenever $p_d(y)>0$,
$p_t$ converges in distribution to $p_d$ when $t \rightarrow +\infty$. 
\end{theorem}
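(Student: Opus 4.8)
The plan is to turn the two idealized update rules into a single deterministic recursion on the generator and then analyze that recursion. First I would substitute the optimal discriminator $D_t(y)=p_d(y)/(p_d(y)+p_{t-1}(y))$ (the maximizer of line 3, exactly as in the continuous GAN analysis) into the definition $q_t\propto p_{t-1}D_t$. Since line 4 is assumed to reach $KL(q_t\,||\,p_t)=0$ with unlimited capacity, I get $p_t=q_t$, i.e.
\begin{equation*}
p_t(y)=\frac{1}{Z_t}\,\frac{p_{t-1}(y)\,p_d(y)}{p_d(y)+p_{t-1}(y)},\qquad Z_t=\sum_{y}\frac{p_{t-1}(y)\,p_d(y)}{p_d(y)+p_{t-1}(y)}.
\end{equation*}
The positivity assumption $p_0(y)>0$ whenever $p_d(y)>0$ guarantees, by induction, that $p_t$ stays strictly positive on $\mathrm{supp}(p_d)$ and vanishes off it for $t\ge 1$, so all the ratios below are well defined.

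The key step is a change of variables. Writing $s_t(y)=p_d(y)/p_t(y)$ on $\mathrm{supp}(p_d)$, the multiplicative recursion becomes \emph{affine} in $s$:
\begin{equation*}
s_t(y)=Z_t\bigl(1+s_{t-1}(y)\bigr).
\end{equation*}
Unrolling gives $s_t(y)=C_t+B_t\,s_0(y)$ with $C_t=\sum_{k=1}^{t}\prod_{j=k}^{t}Z_j$ and $B_t=\prod_{j=1}^{t}Z_j$, both \emph{independent of $y$}. Hence the only $y$-dependence of $s_t$ sits in the term $B_t s_0(y)$, and the spread of $s_t$ across outcomes is controlled entirely by the scalar $B_t$.

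Next I would show $Z_t\le\tfrac12$. This follows from the elementary bound $2ab/(a+b)\le\sqrt{ab}$ together with Cauchy--Schwarz, giving $Z_t\le\tfrac12\sum_y\sqrt{p_d(y)p_{t-1}(y)}\le\tfrac12$, with equality only at $p_{t-1}=p_d$. Therefore $B_t\le 2^{-t}\to 0$ geometrically, so for each fixed $y$ the fluctuating part $B_t s_0(y)$ vanishes and $s_t(y)=C_t+o(1)$. It then remains to pin the constant. Using the normalization $\sum_y p_d(y)/s_t(y)=1$ together with $s_t(y)\ge C_t$ gives $C_t\le 1$; conversely, along any subsequence $C_{t_k}\to c$ (the sequence lies in $[0,1]$, hence is bounded) Fatou's lemma applied to $p_d(y)/(C_{t_k}+B_{t_k}s_0(y))\to p_d(y)/c$ yields $1/c\le 1$, i.e. $c\ge 1$. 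Hence $C_t\to 1$, so $s_t(y)\to 1$ and $p_t(y)\to p_d(y)$ pointwise; Scheff\'e's lemma then upgrades this to convergence in total variation, hence in distribution.

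The main obstacle is precisely this last step: the coefficients $C_t,B_t$ are themselves functionals of the whole trajectory through the $Z_j$, so the recursion cannot be solved in closed form and the convergence of $C_t$ must be argued indirectly. The clean route is to decouple the two pieces --- first establish $B_t\to 0$ from the universal bound $Z_t\le\tfrac12$, which requires no knowledge of the trajectory, and only then close the argument through the normalization constraint by a Fatou/subsequence argument. Care is also needed to justify the limit exchange in the normalization sum when $\mathcal{Y}$ is infinite and to confirm that the support is preserved so that no mass escapes in the limit; this is exactly why the conclusion is stated as convergence in distribution.
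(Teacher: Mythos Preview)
Your argument is correct and structurally identical to the paper's: you derive the same deterministic recursion $p_t\propto p_{t-1}p_d/(p_d+p_{t-1})$, perform the same change of variables $s_t(y)=p_d(y)/p_t(y)$ (this is exactly the paper's auxiliary sequence $\hat z_t(y)$), and unroll it into the same affine form $s_t=C_t+B_t\,s_0$ with $B_t=\prod_{j\le t}Z_j$. The two proofs differ only in how the scalar sequences $B_t$ and $C_t$ are handled, and in both places your version is sharper. First, the paper only establishes $z_t<1$ (via the identity $z_{t+1}=\mathbb{E}_{p_t}[\hat z_t/(\hat z_t+1)]$), which by itself does \emph{not} imply $\prod_s z_s\to 0$; your bound $Z_t\le\tfrac12$, obtained from the harmonic--geometric mean inequality together with Cauchy--Schwarz, gives geometric decay $B_t\le 2^{-t}$ and closes that step cleanly. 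Second, the paper infers from $\hat z_t(y)-\hat z_t(y')\to 0$ that $\hat z_t$ converges to a constant, which is not a valid deduction without an additional argument; your Fatou/subsequence treatment of the normalization identity $\sum_y p_d(y)/s_t(y)=1$, pinning $C_t\to 1$, is the rigorous way to finish, and your use of Scheff\'e's lemma makes the passage to convergence in distribution explicit. In short: same skeleton, but your write-up actually patches two soft spots in the appendix proof.
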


As for classic continuous  GANs, the neural architectures used to define generator and discriminator function sets $\cal G$ and $\cal D$  in practice represent a limited family of distributions, depending of their depth and width. However, the given theorem allows us to expect reasonable behavior for sufficiently powerful architectures. 
The following theorem relaxes the constraint on optimal discriminator (proof in Appendix~\ref{proof2}).
\begin{theorem}
\label{theo2}
With $p_{t} \propto p_{t-1} D_{t}$, and if the  discriminator is sufficiently trained, i.e. we have $\log \eta = \min\left(\mathop{\mathbb{E}}\limits_{y \sim p_d(y)}[\log(D_{t}(y)], \mathop{\mathbb{E}}\limits_{y \sim p_{t-1}(y)}[\log(1-D_{t}(y))]\right) $,  
with $\eta \in ]\frac{1}{2};1[$,
then we have at each iteration of Algorithm \ref{algo:RMLGAN}: $\Delta_t \triangleq KL(p_d||p_{t}) - KL(p_d||p_{t-1}) \leq \log(\frac{1}{\eta}-1) <0$. 
\end{theorem}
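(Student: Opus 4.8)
The plan is to manipulate the difference of KL divergences directly and reduce $\Delta_t$ to a comparison between the log-partition function of the multiplicative update and the expected log-discriminator under $p_d$, then bound each piece using the two training guarantees bundled in the definition of $\eta$.

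First I would make the cooperative update explicit by writing $p_t(y) = p_{t-1}(y) D_t(y)/Z_t$, where $Z_t = \mathbb{E}_{y \sim p_{t-1}}[D_t(y)]$ is the normalizing constant. Substituting this into $\Delta_t$ and expanding both KL terms, the $p_d \log p_d$ contributions cancel and the ratio simplifies as $p_{t-1}(y)/p_t(y) = Z_t/D_t(y)$, collapsing everything to the clean identity
$$\Delta_t = \mathbb{E}_{y \sim p_d}\!\left[\log \frac{p_{t-1}(y)}{p_t(y)}\right] = \log Z_t - \mathbb{E}_{y \sim p_d}[\log D_t(y)].$$
This is the structural heart of the argument: the multiplicative, cooperative form of the update is precisely what makes the per-step change in KL depend only on the discriminator and on $Z_t$, and not on the intractable generator densities themselves.

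Next I would bound the two terms separately. The second term is immediate from the hypothesis, since $\log \eta \leq \mathbb{E}_{p_d}[\log D_t]$ gives $-\mathbb{E}_{p_d}[\log D_t(y)] \leq -\log \eta$. For the partition function I would rewrite $Z_t = 1 - \mathbb{E}_{y \sim p_{t-1}}[1 - D_t(y)]$ and then invoke Jensen's inequality on the concave $\log$: combining $\mathbb{E}_{p_{t-1}}[\log(1 - D_t)] \geq \log \eta$ with $\mathbb{E}_{p_{t-1}}[\log(1 - D_t)] \leq \log \mathbb{E}_{p_{t-1}}[1 - D_t]$ yields $\mathbb{E}_{p_{t-1}}[1 - D_t] \geq \eta$, hence $Z_t \leq 1 - \eta$ and so $\log Z_t \leq \log(1 - \eta)$.

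Combining the two bounds gives $\Delta_t \leq \log(1 - \eta) - \log \eta = \log(\tfrac{1}{\eta} - 1)$, and the constraint $\eta > 1/2$ forces $\tfrac{1}{\eta} - 1 < 1$, making the right-hand side strictly negative. The only delicate step is the Jensen inequality, and the point to check is that it runs in the favorable direction: I need an upper bound on $Z_t$, equivalently a lower bound on $\mathbb{E}_{p_{t-1}}[1 - D_t]$, which is exactly what concavity of $\log$ delivers from the second training guarantee; everything else is the algebraic cancellation in the opening identity. I would also note in passing that the hypotheses keep $D_t$ bounded away from $0$ on the support of $p_d$ and away from $1$ on the support of $p_{t-1}$, so the log-expectations are finite and all manipulations are legitimate.
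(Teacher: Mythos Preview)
Your proof is correct and follows essentially the same route as the paper: you derive the identity $\Delta_t = \log Z_t - \mathbb{E}_{p_d}[\log D_t(y)]$, bound the second term directly from the hypothesis, and bound $\log Z_t$ via Jensen applied to $\mathbb{E}_{p_{t-1}}[\log(1-D_t)]$. The paper's argument is identical step for step, so there is nothing to add.
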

In other words, it suffices that both parts of the discriminator objective exceed the random accuracy (i.e.,  $1/2$) in expectation to make $q_t \propto p_{t-1} D_{t}$ a useful target to be approximated at each step. Even with only a few gradient steps at each iteration, we can reasonably assume that the parameters space is smooth enough to guarantee the convergence of the algorithm, with almost only useful  gradient steps. We also note that the better discriminator (i.e., higher $\eta$), the more useful is a move from $p_{t-1}$ to $p_{t}$ (in terms of KL).

Getting back to Algorithm~\ref{algo:RMLGAN},
at line 4, optimization can be performed via gradient descent steps  $\nabla_{p_t} KL(q_t||p_t)$,  
which can be rewritten via Importance Sampling as: 
\begin{eqnarray}
\label{gradient}
\nabla_{{p_t}} KL(q_t||p_t)   
= - & \hspace*{-0.7cm}\mathop{\mathbb{E}}\limits_{y \sim q_t(y) \propto p_{t-1}(y) D_t(y)}\hspace*{-0.3cm}\left[ \nabla_{p_t} \log p_t(y) \right] \\ 
= - & \hspace*{-0.5cm}\mathop{\mathbb{E}}\limits_{y \sim p_{t-1}(y)}\left[\frac{q_t(y)}{p_{t-1}(y)} \nabla_{p_t} \log p_t(y) \right] \nonumber \\
 =  - & \hspace*{-0.5cm} \frac{1}{Z_t}\hspace*{-0.0cm}\mathop{\mathbb{E}}\limits_{y \sim p_{t-1}(y)}\hspace*{-0.3cm}\left[D_t(y) \nabla_{p_t} \log p_t(y) \right]  
 \label{gradient2}
\end{eqnarray}
with $Z_t=\sum_{y \in {\cal Y}} p_{t-1}(y) D_t(y) $ the partition function of $q_t$.
Note that, to the exception of the partition score $Z_t$ that acts as a scale at each step, the considered gradient is closely similar to what is optimized in classic discrete GANs via reinforcement learning (i.e., policy gradient optimization of $p_t$ and the discriminator score as reward, as described for instance in \cite{coldGAN}), 
when only one gradient update is performed at each iteration.

The effect of this scaling factor can be seen when written as an expectation, i.e. $Z_t=\mathop{\mathbb{E}}_{y \sim p_{t-1}(y)} [D_t(y)]$. From this, it is clear that $Z_t$ is maximized when the generator distribution coincides with $D_t$, i.e. when  $p_{t-1}$ allocates best probability mass for samples judged as the most realistic by the current discriminator. In the absence of such a normalization term,  classic GAN approaches need to set an arbitrary learning rate scheduling to avoid the explosion of gradient magnitude as $p_t$ gets closer to $p_d$. 
Our approach, naturally stabilized by $Z_t$, does not require such a difficult tuning to ensure convergence -- 
%
as verified empirically in Section~\ref{xp}. 

\section{Cooperating for NLG}

Many NLG tasks (e.g., translation, summarization, question generation, etc.) imply a context as input. This section first presents the extension of Algorithm~\ref{algo:RMLGAN} to this setting, and then discusses its practical implementation and the sampling strategies that enable its efficient use 
in real-world settings.  

\subsection{Learning algorithm} 
Let $\Gamma$ be a training set of $N$ samples $(x^i,y^i)$ where each $x^i \in {\cal X}$ is a (possibly empty) context (assumed to be sampled from a hidden condition distribution $p_x$) and $y^i \sim p_d(y^i|x^i)$ is the corresponding observation. Algorithm~\ref{algo:RMLGAN2} gives the practicable implementation of Algorithm~\ref{algo:RMLGAN} for large scale NLG tasks. It considers parametric distributions $p_\theta$ and $D_\phi$, implemented as deep neural networks,\footnote{Transformer T5 \cite{raffel2019exploring} in our experiments} with respective parameters $\theta$ and $\phi$. Thus, $p_{\theta}:{\cal X} \times {\cal Y} \rightarrow [0\,;1]$ is  the generative conditional distribution, where $p_{\theta}(y|x)=\prod_{j=1}^{|y|} p_{\theta}(y_j|x,y_{0:j-1})$ with $p_{\theta}(y_j|x,y_{0:j-1})$ the categorical distribution for token $j$ of sequence $y$ over the vocabulary, given the context $x$ and the sequence history $y_{0:j-1}$. 
Also, $D_\phi: {\cal X} \times {\cal Y} \rightarrow [0\,;1]$ is the conditional discriminative distribution, where   $D_\phi(x,y)$ returns the probability for sequence $y$ to having been generated from $p_d$ rather than $p_\theta$ 
given the context $x$. 

The discriminator is trained at line 5 of Algorithm~\ref{algo:RMLGAN2}, on a batch of $m$ samples of contexts, associated with corresponding sequences $y$ from the training set and generated sequences $\hat{y}$ from the current generator. Consistently with \citep{das-local}, to effectively drive the cooperative decoding process in guided sampling strategies $\hat{q}$ (see below), the discriminator is trained, using a classical left-to-right mask, on every possible starting sub-sequence $y_{1:j}$ in each sample $y$ (i.e., taken from its start token to its $j$-th token), with $j \leq l$ and $l$ standing for the max length for any decoded sequence.  
This enables discriminator  predictions for unfinished sequences (allowing to avoid  complex rollouts in MCTS, see below). 

Line 7 of Algorithm~\ref{algo:RMLGAN2} performs a gradient descent step for the generator, according to samples provided by a sampling strategy $\hat{q}$. Ideally, consistently with Eq.(\ref{gradient}), training samples should be provided by $q_{\theta,\phi}(y|x)\propto p_\theta(y|x) D_\phi(y|x)$. However, directly sampling from this distribution is intractable. Various sampling strategies can be considered, using a weighted importance sampling scheme to unbias gradient estimators in line 7. 
For the task  of unconditional generation (i.e., empty contexts $x$) and the case where $\hat{q}=p_\theta$, we can show that this is equivalent, up to a constant factor, to the gradient estimator given in Eq.~(\ref{gradient2}), with expectations estimated on the current batch, since in that case  $w^i$ reduces to  
$D_\phi(x^i,\hat{y}^i)$.    However, more efficient sampling strategies $\hat{q}$ can be employed, as discussed in the following.

\begin{algorithm}
\caption{\textit{Generative Cooperative Networks}}
\label{algo:RMLGAN2}
\begin{algorithmic}[1]
\STATE \textbf{Input:} generator $p_\theta$ with parameters $\theta$, discriminator $D_\phi$ with parameters $\phi$, training set $\Gamma$,  sampling strategy $\hat{q}$, batch size $m$, max sequence length $l$. 
\FOR{$t = 1,\ldots, T$}

\STATE Sample $\{(x^i,y^i)\}_{i=1}^{m}$ from $\Gamma$\;
\\[0.2cm]

\STATE $\forall i \in[\![1\,;m]\!]$: Sample $\hat{y}^i \sim p_\theta(\hat{y}^i|x^i)$; 

\STATE 
    $\phi \leftarrow \phi + \epsilon_\phi 
    \sum\limits_{i=1}^m \sum\limits_{j=1}^{l}  \begin{bmatrix}  \nabla_\phi \log D_\phi(x^i,y_{0:j-1}^i)] \ + \\   \nabla_\phi \log (1-D_\phi(x^i,\hat{y}_{0:j-1}^i)) 
    \end{bmatrix}
    $\\[0.2cm]

\STATE $\forall i \in[\![1\,;m]\!]$: Sample $\hat{y}^i \sim \hat{q}(\hat{y}^i|x^i)$; 

\STATE $\theta \leftarrow \theta + \epsilon_\theta \left[ \frac{1}{\sum_{i=1}^m w^i} \sum_{i=1}^m w^i \nabla_\theta \log p_\theta(\hat{y}^i|x^i)\right]$\;  \hspace*{2cm} with $w^i=\frac{p_{\theta}(\hat{y}^i|x^i)  D_\phi(x^i,\hat{y}^i)}{\hat{q}(\hat{y}^i|x^i)} $

\ENDFOR
\end{algorithmic}
\end{algorithm}

\subsection{Efficient Sampling}

To minimize the variance of gradient estimators, we need to sample sequences as close as possible to the distribution $q_{\theta,\phi}(y|x)\propto p_\theta(y|x) D_\phi(x,y)$. While directly sampling from such a non-parametric distribution is difficult, and given that rejection-sampling or MCMC methods are very likely to be particularly inefficient in the huge associated support domain, it is possible to build on recent advances in guided decoding for providing methods for sampling informative sequences \cite{das-local, selfgan}, that are both likely from the generator point of view $p_\theta$, and realistic from the discriminator one $D_\phi$. Note that an alternative would have been to exploit the maximum entropy principle \cite{ziebart2010modeling} to learn a neural sampling distribution $\hat{q}_\gamma$ as  $\argmax_{\hat{q}_\gamma} \mathop{\mathbb{E}}_{y \sim  \hat{q}_\gamma(y|x)}[\log p_\theta(y|x) + \log D_\phi(x,y)] + {\cal H}_{\hat{q}_\gamma(.|x)}$, with ${\cal H}_{q}$ the entropy of distribution $q$. This would however imply a difficult learning problem at each iteration of Algorithm~\ref{algo:RMLGAN2}, and a sampling distribution $\hat{q}_\gamma$ that lags far behind $q_{\theta,\phi}$ if only few optimization steps are performed.   

\subsubsection{Sampling Mixtures}

Before presenting our cooperative decoding strategy, we consider the use of variance reduction techniques when sampling from the generator distribution, which 
can be long-tailed, thus leading to unreliable sequence samples. In particular, Nucleus Sampling \cite{holtzman2019curious}  has been shown to produce higher quality texts than more classic sampling strategies, including beam search and low temperature-based sampling \cite{coldGAN}. 
Its principle is to sample tokens at each decoding step 
only from the nucleus $V_p^{(\sigma)}$ of the considered generative distribution $p$, containing a specified amount $\sigma$ of the probability mass. More precisely, let $V_p^{(\sigma)}$ be the minimal set of tokens from the vocabulary $\cal V$ whose total probability mass is greater than or equal to $\sigma$ (i.e., $V_p^{(\sigma)} = \argmin_{V \subseteq {\cal V}, \sum_{w \in V} p(w) \geq \sigma} |V|$). 
We denote in the following $p^{nucleus=\sigma}$ the truncation of distribution $p$ on the set of tokens $V_p^{(\sigma)}$. 

Using this technique for defining $\hat{q}$ in our Algorithm \ref{algo:RMLGAN2}  could allow to avoid usual text degeneration issues \cite{holtzman2019curious}, which would benefit to our generative learning process by providing better formed sequences to the discriminator. However, Importance Sampling (IS) demands that $\hat{q}(y)>0$  for any $y \in {\cal Y}$ such that $q(y)>0$. A direct use of nucleus sampling as $\hat{q}=p^{nucleus=\sigma}$, or even more a classic beam search, cannot guarantee this property, which might  involve ignoring many useful parts of  ${\cal Y}$ in the gradient estimation, hence implying biases. 

To cope with this, we propose to follow \cite{coldGAN},  which considers sampling  distributions $\hat{q}$ as mixtures, ensuring that both properties, i.e. IS consistency and high quality samples, are verified. Formally, we use
\begin{equation}
\label{mixture}
\hat{q}_\theta(y|x)=\epsilon p_\theta(y|x) + (1-\epsilon) p_\theta^{nucleus=\sigma}(y|x) 
\end{equation}
where $\epsilon$ stands for a small probability for sampling from the true generator distribution rather than using a nucleus decoding ($\epsilon=0.1$ and $\sigma=0.1$ in our experiments), 
thus ensuring the validity of our IS estimator. 
Please also note that, using such mixture trick, each IS  weight is upper-bounded by $D_\phi(y|x)/\epsilon$, which greatly limits gradient explosion issues usually associated with the use of IS in RL (or over-weighting of unlikely sequences in weighted IS).     



\subsubsection{Guided Sampling}

Next, we propose to consider cooperative decoding strategies to get a sampling distribution closer to $q_{\theta,\phi}$. More specifically, we propose to employ a Monte Carlo Tree Search strategy (MCTS), as recently considered for NLG in \cite{selfgan,leblond2021machine, DBLP:journals/corr/abs-2109-13582}. Using left-to-right decoding strategies, it can happen that all sequence candidates are judged as unrealistic by the discriminator, avoiding any useful learning signal for the generator.   MCTS allows to deal with this strong limitation of myopic decoding, by 
anticipating the final utility of the successive decisions. In MCTS, a tree is built throughout decoding by repeating the four following steps: selection, expansion, evaluation, and back-propagation. 

\paragraph{Step 1: Selection} corresponds to following a path in a tree of already explored decisions for future tokens, from its root located at the current state of the sequence to be decoded, to  a leaf $s$ of the tree, for which a value $V(s)$ has not been set yet. At each node $s$ of the tree, the child node $s'$ is selected following the \texttt{PUCT} algorithm  \citep{rosin2011multi, silver2017mastering}: 
 $$   s'=\argmax_{\hat{s}  \in child(s)} \left(V(\hat{s})+c_{puct} p_\theta(\hat{s} \mid s) \sqrt{\frac{N(s)}{1+N(\hat{s})}}\right)
$$
where $p_\theta(\hat{s} \mid s)$ corresponds to the conditional probability of sampling the next token to form sequence of $\hat{s}$ from the sequence corresponding to node $s$, 
according to the current generator probability. For children nodes $\hat{s}$ that have never been selected yet, their value $V(\hat{s})$ equals $0$. $c_{puct}$ is an hyper-parameter that controls the  exploitation/exploration trade-off of the selection process, with $N(s)$ standing for the number of times node $s$ has been selected in simulations.  

\paragraph{Step 2: Expansion} corresponds to the creation of child nodes for the identified leaf $s$, if $s$ is not terminal (end-of-sentence token). This is done in our case by restricting to tokens from the nucleus $V_{p_\theta}^{(\sigma)}$ of $p_\theta$,  as presented above. This allows to restrict the width of the tree to the most likely tokens, hence improving efficiency. 

\paragraph{Step 3: Evaluation} of the selected leaf $s$ is usually done in MCTS via a direct sampling (rollout) from $s$ to a terminal node. In our case, this is likely to imply a high variance. We thus replace rollouts by the evaluation of the corresponding unfinished sequence 
(i.e., $V(s) \leftarrow D_\phi(s)$).  

\paragraph{Step 4: Back-propagation} consists in updating values of parent nodes of $s$, to favor most promising nodes in the following selection steps of the process. Consistently with \cite{selfgan}, the value of each parent node $\tilde{s}$ of $s$ is updated as 
the maximal  score 
back-propagated to $\tilde{s}$: 
$V(\tilde{s}) \leftarrow \max(V(\tilde{s}), D_\phi(s))$. This 
led to better results than using the more classic average score from children.    

At the end of the $N$ rounds of these four steps ($N=50$ in our experiments) from a given root $r$, the next token $n$ is selected as the root's child that was the most visited 
(i.e., $\argmax_{s \in child(r)} N(s)$). 
Note that, for unconditional text generation, where no context $x$ is given to the decoder, we rather sample a child proportionally to its number of visits to maintain enough diversity during learning. 
This process is repeated using $n$ as the new root 
until reaching a terminal token or the maximum sequence length (512 in our experiments).

\paragraph{Cooperative Learning with MCTS} To use this MTCS process 
to guide the generator decoding toward sequences of high discriminator scores, in our learning Algorithm \ref{algo:RMLGAN2}, we re-use the same mixture trick as for Nucleus Sampling discussed above: 
\begin{equation}
\label{mixture2}
\hat{q}_\theta(y|x)=\epsilon p_\theta(y|x) + (1-\epsilon) p_\theta^{mcts}(y|x) 
\end{equation}
where $p_\theta^{mcts}(y|x)$ is a Dirac centered on the decoded sequence from the MCTS process in the conditional case (when contexts $x$ are available), and the MCTS sampling distribution (according to number of visits, as described in the MCTS decoding process) in the unconditional case.
%
%
Again, $\hat{q}_\theta(y|x)>0$ whenever $y \in {\cal Y}$ such that $q_{\theta,\phi}(y|x)>0$, and the IS weights are upper-bounded by $D_\phi(y|x)/\epsilon$.





\section{Experiments}
\label{xp}

\subsection{Experimental Setting}
\label{sec:ExperimentalDetails_datasets}
To evaluate the framework, we experiment on standard complementary unconditional and conditional NLG tasks, with the following datasets: 
     
     \textbf{Unconditional NLG} -- Following the same setup as in many related studies (e.g. \cite{coldGAN,fallshort}), we first compare our proposed approaches with NLG baselines on the task of unconditional text generation, where the aim is to reproduce a given generative unknown distribution of texts from samples, on the EMNLP2017 News dataset. 
    
    \textbf{Question Generation} -- The task consists in generating the question corresponding to a given text and answer. For this task,  we use the SQuAD dataset \citep{rajpurkar2016squad}, composed of 100K triplets of Wikipedia paragraphs, factual questions, and their answers. 

     \textbf{Abstractive Summarization} -- The aim of this standard sequence-to-sequence task is to produce an abstract given an input text. We use the CNN/Daily Mail dataset (CNNDM) \citep{nallapati2016abstractive}, composed of 300K news article/summaries pairs. Target  summaries consist of of multiple sentences, allowing us to evaluate models on longer texts than for the Question Generation task.

To compare the models, we consider  the standard BLEU \citep{papineni2002bleu} and ROUGE \citep{lin2004rouge} metrics.  They both are an overlap ratio between n-grams from the generated text and the ground truth. BLEU is precision oriented, while ROUGE is recall oriented.  

For the task of unconditional NLG, where diversity is of crucial importance, we 
follow \cite{fallshort}, who proposed to plot results 
as curves of BLEU (i.e., with samples classically compared to ground truth references, measuring accuracy) vs. self-BLEU (i.e., with generated samples compared to themselves, measuring diversity). 
This is done by sampling texts for various temperature settings (i.e. temperature of the softmax on top of the generator). 



We compare our models with the following baselines:

\textbf{MLE} --  
We naturally consider as an important baseline the T5 model trained via Teacher Forcing. It is furthermore used as a starting point for \textit{all} the models and baselines (unless specified). 

\textbf{ColdGAN} -- This model was one of the first GANs to outperform MLE for NLG tasks \citep{coldGAN}. Its main contribution was to introduce the use of a sampling strategy with  lowered softmax temperature during training, with the objective of stabilizing the training process. We use its best reported version, which considers a mixture with Nucleus Sampling.   

\textbf{SelfGAN} -- The work presented in \cite{selfgan} uses an expert-iteration algorithm in combination with various different  cooperative decoding strategies. In the following, we report results from its version using a MCTS process, which recently obtained state-of-the-art results on the three considered NLG tasks.

\textbf{GCN} -- Our Generative Cooperative Networks which we introduce in this paper. Three versions of Algorithm~\ref{algo:RMLGAN2} are considered in the experiments: GCN$^{\hat{q}=p}$, which corresponds to a classic GAN with implicit dynamic scheduler induced by partition $z_t=\sum_i w^i$, GCN$^{\hat{q}=Nucleus}$, which considers a mixture with Nucleus Sampling as defined in Eq.~(\ref{mixture}), and  GCN$^{\hat{q}=MCTS}$, which considers a mixture with a discriminator-guided MCTS, as defined by Eq.~(\ref{mixture2}). 

\textbf{GAN} -- For ablation study purposes, we also consider 
similar versions of our implementation of Algorithm~\ref{algo:RMLGAN2} but without the use of a normalization, respectively called GAN$^{\hat{q}=p}$, GAN$^{\hat{q}=Nucleus}$
and GAN$^{\hat{q}=MCTS}$. The normalization is replaced by a linear learning rate scheduler tuned on a validation set for GAN$^{\hat{q}=p}_{+scheduler}$, GAN$^{\hat{q}=Nucleus}_{+scheduler}$ 
and GAN$^{\hat{q}=MCTS}_{+scheduler}$. 

For each model, any decoding method could be applied at inference time, independently of the training scheme. In the following, unless specified otherwise, we report 
results obtained with a classic Beam Search decoding (with a beam size of 3)
for all the experiments.

In all our experiments, our models are initialized with the seq2seq T5 model \citep{raffel2019exploring}, trained via Teacher Forcing.
Unless specified otherwise, we use the T5-small  
architecture (60M parameters), as implemented in the HuggingFace library \citep{wolf2019huggingface}.
For our best setup, we also report the results using T5-large (3 billion parameters), denoted as T5-3B. Using 4 Nvidia V100 SXM2 GPUs, \textit{GCN}$^{\hat{q}=MCTS}$ training took 32 hours for summarization, and 8 hours for QG. This is comparable to the state-of-the-art SelfGAN model. \textit{GCN}$^{\hat{q}=Nucleus}$ only required 8 hours for training on summarization, against 2 hours for QG.





\subsection{Results and Discussion}

\paragraph{Unconditional Text Generation} Figure~\ref{fig:uncond} reports results for the unconditional NLG task. First, we observe the crucial importance of the scheduler for the GAN baselines: all of its versions without scheduler (and any normalization as in vanilla discrete GANs) strongly diverge since the first training epoch, obtaining significantly weaker results than MLE (which is the starting point of all curves from the left graph). However, we see that our GCNs are naturally implicitly scheduled, with results comparable to the scheduled version of GANs, thanks to its self-normalized IS. This is an important result, since tuning the rate scheduler from a validation set is tricky and resource consuming. We also note the significantly better and comparable behavior of GCN$^{\hat{q}=Nucleus}$ and GCN$^{\hat{q}=MCTS}$ compared to GCN$^{\hat{q}=p}$. This validates that the use of smarter sampling helps training, although the space of correct sequences is too large to fully benefit from the MCTS guided sampling.  The right graph from Figure~\ref{fig:uncond} plots accuracy vs diversity curves. Here again, we observe the significant impact of scheduling, which is naturally implied in our GCN approach, not only for the sample quality, but also on the coverage of the induced distribution. The graph also reports curves for previous GAN approaches, including \cite{maligan,seqgan}, as given by  \cite{de2019training} for the same setting, significantly under the MLE baseline.            





\begin{figure*}[!ht]
    \centering 
    \includegraphics[width=0.48\textwidth]{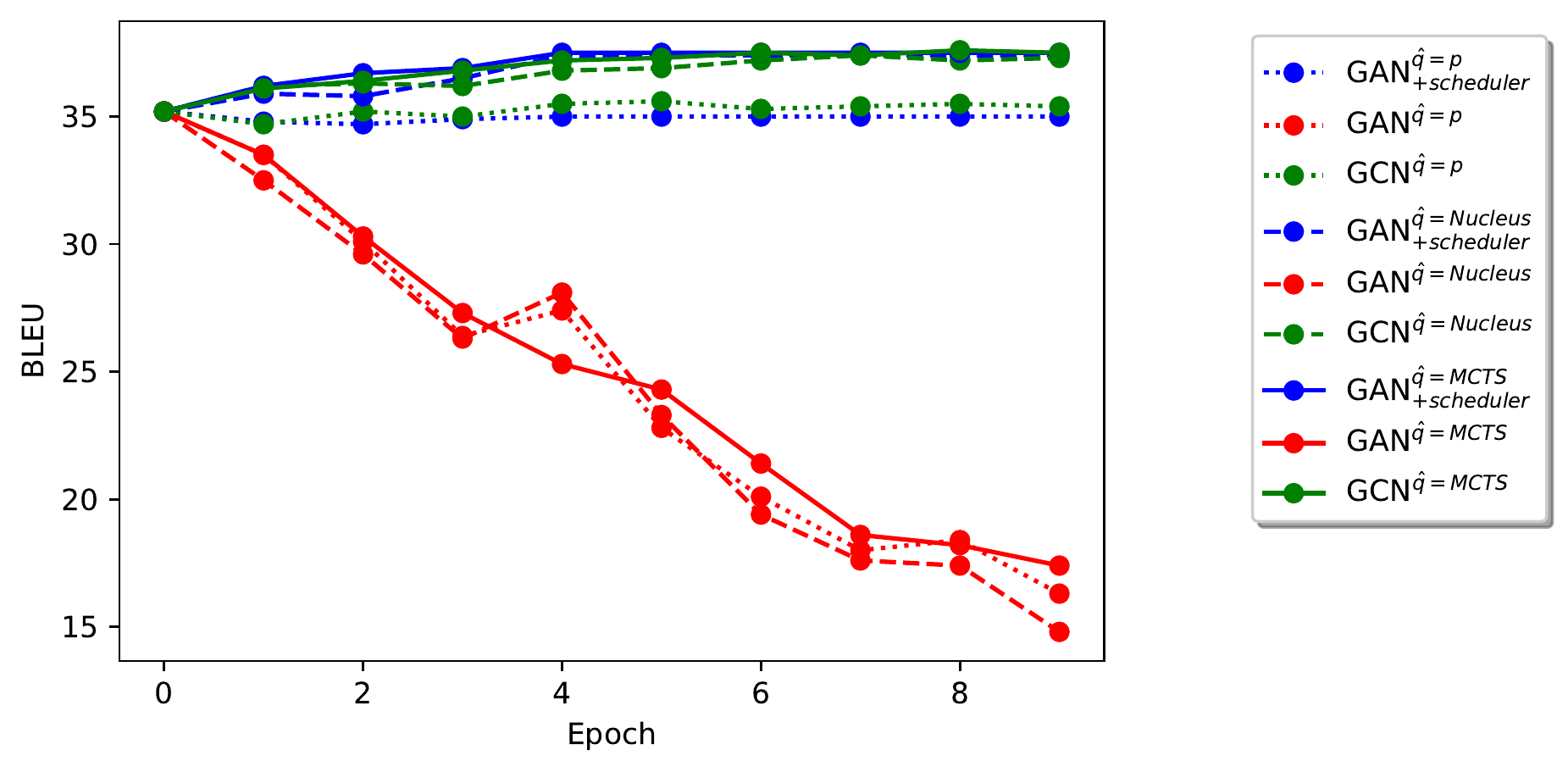}
    \includegraphics[width=0.48\textwidth]{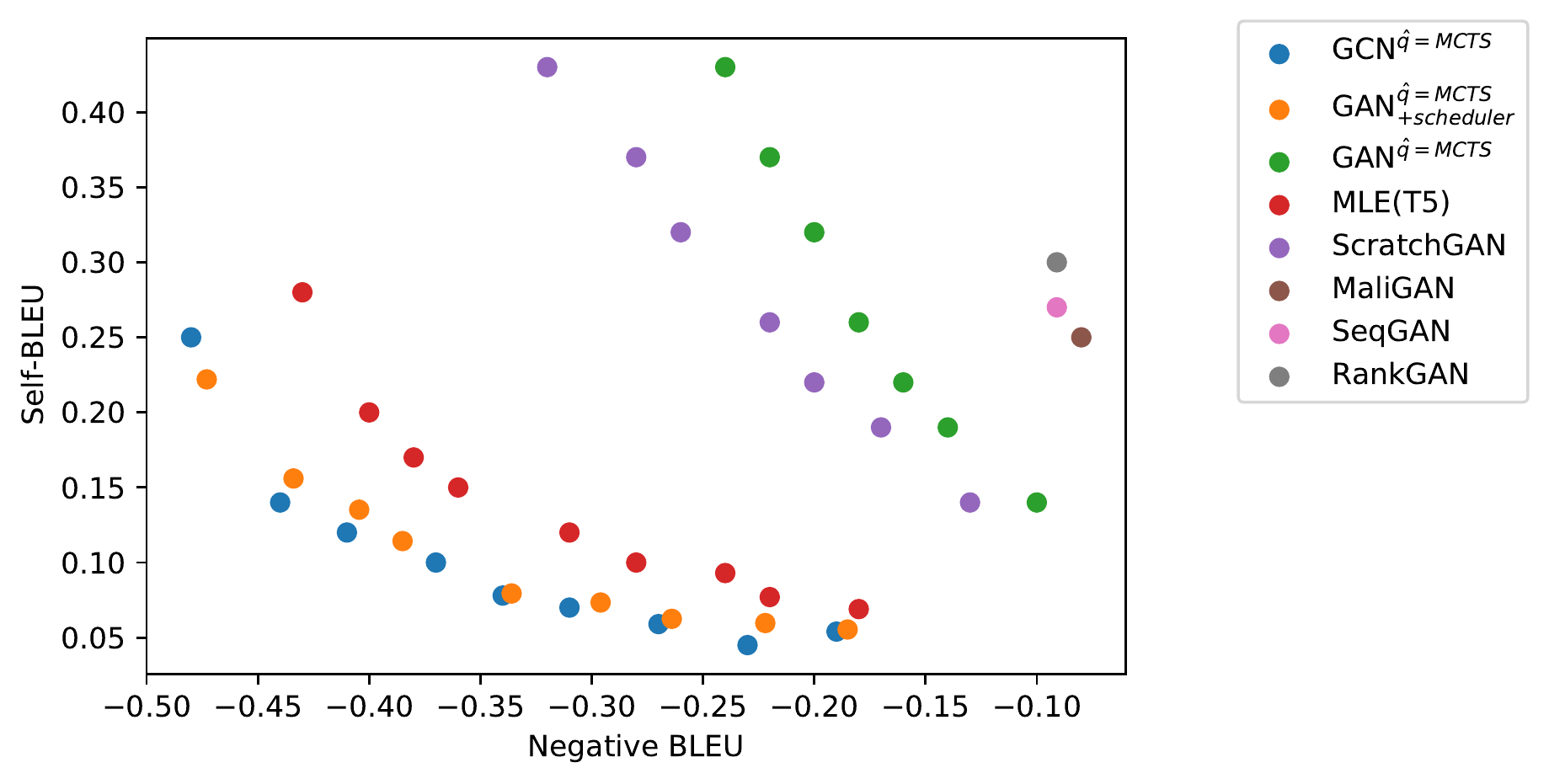}
    \vspace{-5pt}
    \caption{Results on the EMNLP 2017 dataset. Left: Evolution of BLEU results on tests sets w.r.t. training epochs (higher is better) -- red: GAN without scheduler, blue: GAN with scheduler, green: GCN. Right: Curves of negative BLEU vs self BLEU (lower is better). Scores for previous studies are taken from~\cite{de2019training}. }
    \label{fig:uncond}
\end{figure*}

\paragraph{Conditional Text Generation} More important are the results for conditional text generation, for which applications are numerous. On both considered tasks, we observe from Figure~\ref{fig:cond} the same trends as for unconditional NLG, with a  dramatic divergence of classic GAN approaches. 
We note a significant improvement of our GCN approaches compared to their GAN scheduled counterparts on both tasks, with a clear advantage for  GCN$^{\hat{q}=MCTS}$ on summarization, where the discriminator guided sampling process   obtains very stable results, significantly greater than those of other considered approaches.  This result confirms that using MCTS to sample during the learning process is key to produce long texts of better quality.  
These trends on the BLEU metrics are confirmed by numerical results from Table~\ref{tab:main_results}, where GCN$^{\hat{q}=MCTS}$ obtains the best results on both tasks over three metrics, with more than 2 ROUGE-L points gained over the very recent state-of-the-art approach SelfGAN (which also uses MCTS sampling) on QG. 
Note that these results were obtained without the complex variance reduction techniques that other RL-based GAN approaches require for obtaining results comparable to MLE, which underlines further the interest of our approach. 
For completeness, we also report results using MCTS for decoding at test time, denoted as GCN$^{\hat{q}=MCTS}_{decod=MCTS}$, which shows some further improvements, consistently with \cite{selfgan}.  
Finally, our experiment on scaling $GCN^{\hat{q}=MCTS}$ to a larger model (i.e. T5 3B instead of T5 Small) allows us to further improve the results, indicating the scaling potential for GCN, and establishing a new state-of-the-art for QG and summarization.

\begin{figure*}[!ht]
    \centering 
    \includegraphics[width=0.48\textwidth]{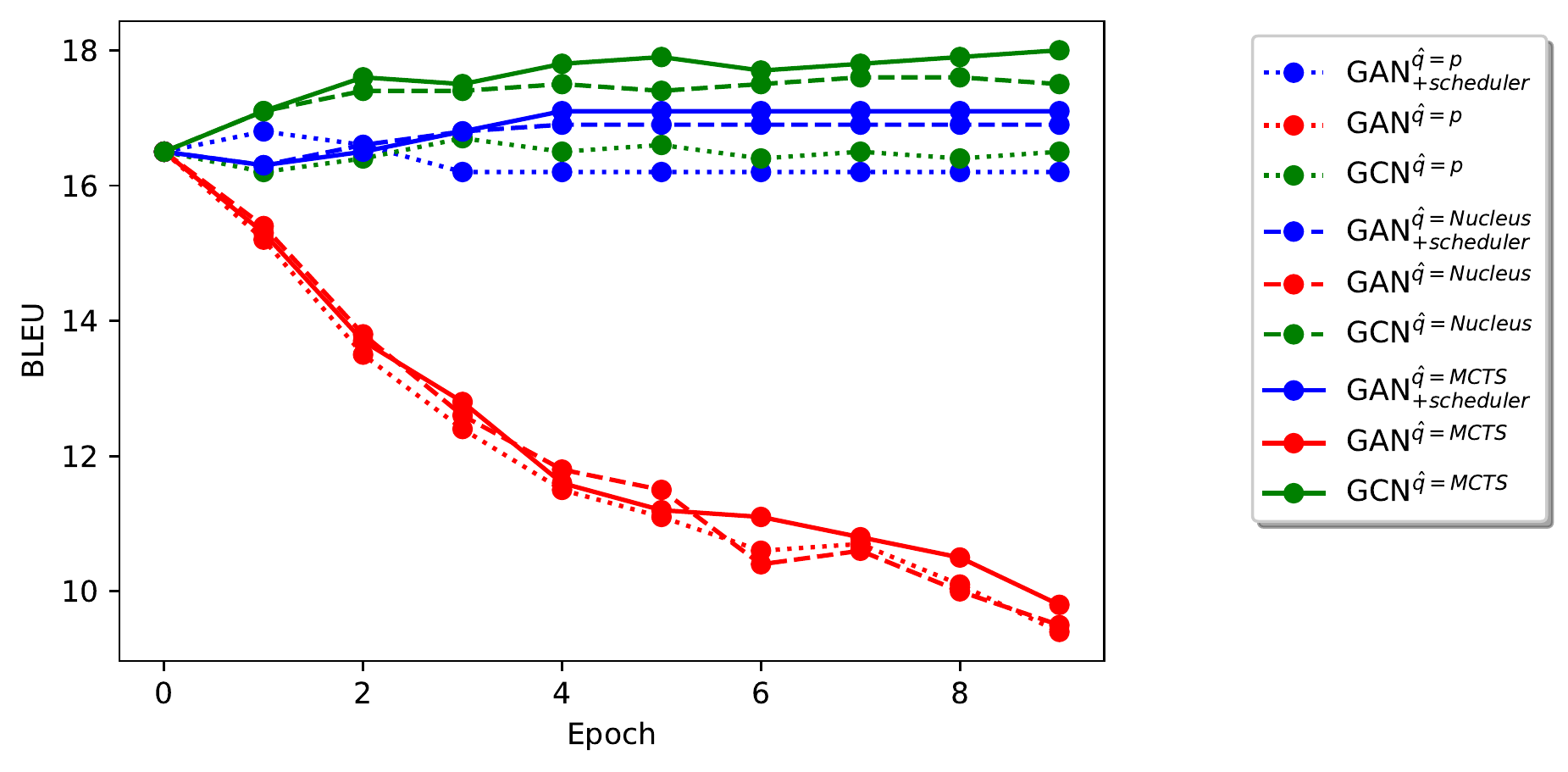}
    \includegraphics[width=0.48\textwidth]{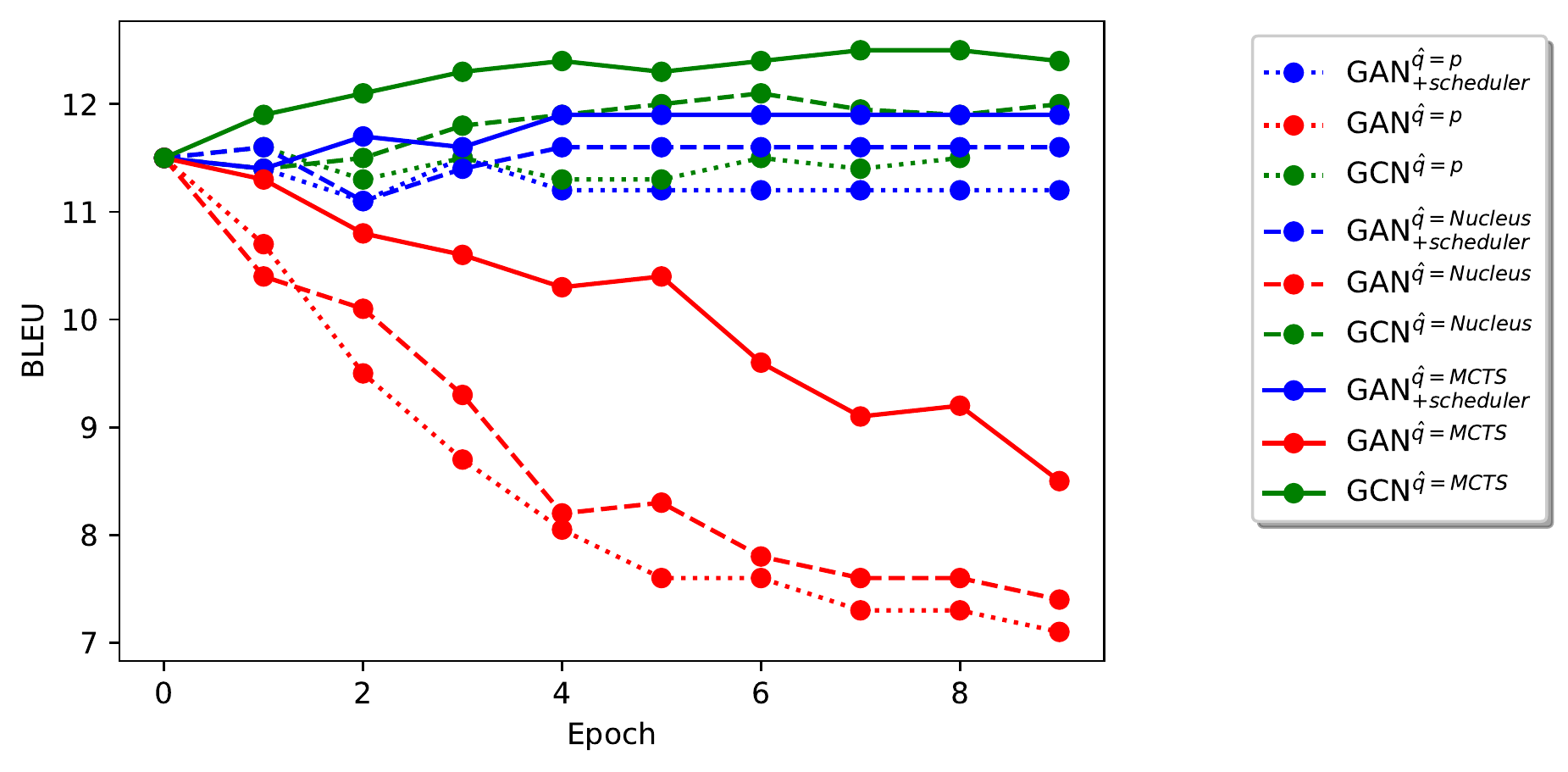}
    \vspace{-5pt}
    \caption{Evolution of performance on the test set w.r.t. training epochs (in term of BLEU, the higher the better), for conditioned NLG tasks. Left: Question Generation, Right: Summarization.} 
    \label{fig:cond}
\end{figure*}

\begin{table}[tb]
\centering
\resizebox{0.45\textwidth}{!}{
\begin{tabular}{l|ccc|ccc}
                        & \multicolumn{3}{c}{QG}                      & \multicolumn{3}{c}{Summarization}             \\
                        \toprule
                        & B        & R-1       & R-L       & B          & R-1       & R-L       \\
MLE                     & 16.5        & 43.9          & 40            & 11.5          & 36.8          & 34.9          \\
ColdGAN                 & 16.9        & 44.2          & 40.3          & 11.6          & 37.8          & 36.4          \\
SelfGAN                 & 17.2        & 44.3          & 40.6          & 12.3          & 38.6          & 36.7          \\
\midrule
GAN$^{\hat{q}=p}_{+scheduler}$           & 16.2        & 43.1          & 39.3          & 11.2          & 36.1          & 34.3          \\[0.1cm]
GAN$^{\hat{q}=p}$           & 9.4         & 25.0          & 22.8          & 7.1           & 21.0          & 19.9          \\[0.1cm]
GCN$^{\hat{q}=p}$                & 16.5        & 43.9          & 40.0          & 11.5          & 36.8          & 34.9          \\
\midrule
GAN$^{\hat{q}=Nucleus}_{+scheduler}$ & 16.9        & 45.0          & 41.0          & 11.6          & 37.7          & 35.7          \\[0.1cm]
GAN$^{\hat{q}=Nucleus}$ & 9.5         & 25.3          & 23.0          & 7.4           & 21.2          & 20.1          \\[0.1cm]
GCN$^{\hat{q}=Nucleus}$      & 17.5        & 45.3          & 42.4          & 12            & 39.0          & 37.0          \\
\midrule
GAN$^{\hat{q}=MCTS}_{+scheduler}$ & 17.1        & 45.5          & 41.5          & 11.9          & 38.1          & 36.2          \\[0.2cm]
GAN$^{\hat{q}=MCTS}$ & 9.8         & 26.1          & 23.8          & 8.5           & 21.9          & 20.7          \\[0.2cm]
GCN$^{\hat{q}=MCTS}$      & \textbf{18} & \textbf{45.9} & \textbf{42.6} & \textbf{12.4} & \textbf{39.1} & \textbf{37.1} \\
\midrule
GCN$^{\hat{q}=MCTS}_{decod=mcts}$ & \textbf{18.4} & \textbf{46.3} & \textbf{43.1} &  \textbf{12.7} & \textbf{39.4} & \textbf{37.4} \\
\midrule
GCN$^{\hat{q}=MCTS}_{\qquad T5-3B}$      & \textbf{21.8} & \textbf{49.8} & \textbf{45.9} & \textbf{19.2} & \textbf{44.2} & \textbf{43.8} \\
\bottomrule
\end{tabular}
}
\caption{\label{tab:main_results} Final results on QG and Summarization test sets, in terms of BLEU (B), ROUGE-1 (R-1) and ROUGE-L (R-L).}
\end{table}

\section{Related Work}
Under the most popular paradigm, sequence generative models~\cite{sutskever2014sequence} are usually trained with Maximum Likelihood Estimation (MLE), via Teacher Forcing~\cite{williams1989learning}. %
Though MLE has lots of attractive properties, it is however prone to overfitting for auto-regressive generative models, due to a too strong exposition to the somehow limited ground-truth  data. More importantly, MLE suffers from the mismatch between learning and simulation conditions, i.e. the well known exposure bias \cite{ranzato2015sequence, bengio2015scheduled}. Namely, at inference, the model is conditioned on sequences of previously generated tokens which may have never been observed at training time. MLE also lacks a sequence-level loss to accurately optimize sequence probabilities~\cite{welleck2019neural, negrinho2018learning}, resulting in often degenerated texts (e.g., prone to repetition)~\cite{holtzman2019curious} .

To overcome the above shortcomings of MLE, recently, many sequential GANs for discrete outputs have been proposed in the literature \cite{seqgan,guo2018long}, in which generators are typically trained to fool a learned discriminator via reinforcement learning (e.g., Policy Gradient such as the REINFORCE algorithm). While these methods allow to fill the learning-simulation gap, they usually suffer from high variance, partly due to the non-stationarity of their reward distribution. Until recently  with some advances on smoother sampling techniques and the use of control variates \cite{coldGAN},  GAN approaches usually under-performed MLE training in most real-world tasks \cite{fallshort}, with resulting sharp distributions that often sacrifice diversity for quality. Recent works based on cooperative decoding \cite{selfgan,das-local} opened the way for more efficient approaches, that rely on the discriminator not only as reward, but also for sampling, as we do in this work. However, these approaches exhibit instabilities, as discussed in Section~\ref{gcn}, which we dealt with in this paper, leveraging a more theoretically sound framework.


While not arising from the same perspective, our work on GANs for discrete outputs is strongly related to the MaliGAN approach, proposed in \cite{maligan}. Like ours, this approach 
relies on the work of  \cite{norouzi2016reward}  that unified reinforcement learning and maximum likelihood, via the consideration of a KL divergence loss between a reward-derived distribution $q$ and the learned distribution $p_t$. It extends this framework for the GAN setting by substituting  to $q$ a distribution based on a learned discriminator, 
to gain in flexibility compared to 
hand-defined metrics considered in \cite{norouzi2016reward}. 
However, rather than iteratively driving the learning process towards the data distribution $p_d$ as in this paper,  \cite{maligan} attempts to directly model it with the assumption that the discriminator is  close enough to the optimum. The approach consists in defining a reward function derived from the usual property of optimal discriminators in classic GANs, namely that $D^*(y)=p_d(y)/(p_d(y) + p_{gen}(y)$), to weight sequences (with IS) according to the unknown target distribution $p_d$. Note that this can be seen as a specific instance of our Algorithm \ref{algo:RMLGAN}, with $h(p_{t-1}, D_{t})$ defined as $p_{t-1} D_{t}/(1 - D_{t})$.   

However, the optimality of the discriminator is far from being guaranteed at each step: in \cite{coldGAN}, discriminators are shown to be strongly specialized for the current generator distribution, with possibly many sequences out of that distribution being greatly over-estimated . 
We argue that as \cite{maligan}  intrinsically relies on this optimality,
it is exposed to a high variance of its IS estimator, as acknowledged by 
the many variance reduction techniques the authors employed. Note that even doing so, they obtain comparable results to our simple GCN$^{\hat{q}=p}$ (using a pure sampling approach) for conditional NLG tasks.

In this paper, we only rely on the optimal discriminator property in our proof of convergence, similarly to continuous GANs \cite{gan}, and show that even a decent discriminator drives the convergence process in the right direction. 
%
As a further improvement over \cite{maligan}, we experimentally show, consistently with~\cite{selfgan, das-global, das-local}, that a more sophisticated discriminator-guided sampling process is highly beneficial.

%


Finally, we note that the sampling distribution $q_t$ we introduce in this work (i.e., $q_t \propto p_{t-1} D_t$) is quite similar to the energy-based generative model considered in \cite{bakhtin2021residual}, which also deals with cooperative decoding for NLG but aims to transform sequence distributions from a constant generative language model $p_\phi$, using an energy function learned by noise contrastive estimation \cite{ma2018noise}. In theory, the resulting model should match the target data distribution $p_d$, but relies on the strong  assumption that the base language model is accurate enough in the domain of $p_d$, residual learning always carrying strong liability to its base model. 
Moreover, the negative sampling considered is performed independently from the learned 
distribution, which might be particularly inefficient for long tailed distributions, with a strong divergence from the target $p_d$. 
Our work suggests that using cooperative sampling could be valuable in such a model.


\section{Conclusion}

The work presented in this paper sheds new light on discrete GAN approaches, and in particular on theoretically-sound approaches such as MaliGAN \cite{maligan}. We give a new perspective for this approach, and introduce a slightly modified algorithm, with strong theoretical guarantees, which can be combined with cooperative sampling strategies to obtain state-of-the-art results on various NLG tasks, and focused on GAN-like approaches, based on a learned discriminator to drive the generator. 
Now, it would be interesting to study how our cooperative mechanisms could apply in the context of approaches based on density ratio estimators, such as promising ones proposed in \cite{lu2019cot,song2020improving}. Hybrid approaches, based on ratio estimators between current densities and expected ones, that can be derived from our theoretical results in optimal GAN conditions, also constitute a promising research perspective, for measuring model drift 
and discovering new regularization objectives. We believe our work paves the way for new formulations of GANs for discrete settings. 
Notably, our assumption in Theorem~\ref{theo2} suggests possibly effective modifications for the discriminator loss, to gain in learning stability.



\bibliography{main}

\begin{thebibliography}{36}
\providecommand{\natexlab}[1]{#1}
\providecommand{\url}[1]{\texttt{#1}}
\expandafter\ifx\csname urlstyle\endcsname\relax
  \providecommand{\doi}[1]{doi: #1}\else
  \providecommand{\doi}{doi: \begingroup \urlstyle{rm}\Url}\fi

\bibitem[Anthony et~al.(2017)Anthony, Tian, and Barber]{expert-iteration}
Anthony, T., Tian, Z., and Barber, D.
\newblock Thinking fast and slow with deep learning and tree search.
\newblock \emph{arXiv preprint arXiv:1705.08439}, 2017.

\bibitem[Bakhtin et~al.(2021)Bakhtin, Deng, Gross, Ott, Ranzato, and
  Szlam]{bakhtin2021residual}
Bakhtin, A., Deng, Y., Gross, S., Ott, M., Ranzato, M., and Szlam, A.
\newblock Residual energy-based models for text.
\newblock \emph{J. Mach. Learn. Res.}, 22:\penalty0 40--1, 2021.

\bibitem[Bengio et~al.(2015)Bengio, Vinyals, Jaitly, and
  Shazeer]{bengio2015scheduled}
Bengio, S., Vinyals, O., Jaitly, N., and Shazeer, N.
\newblock Scheduled sampling for sequence prediction with recurrent neural
  networks.
\newblock In \emph{Advances in Neural Information Processing Systems}, pp.\
  1171--1179, 2015.

\bibitem[Caccia et~al.(2020)Caccia, Caccia, Fedus, Larochelle, Pineau, and
  Charlin]{fallshort}
Caccia, M., Caccia, L., Fedus, W., Larochelle, H., Pineau, J., and Charlin, L.
\newblock Language gans falling short.
\newblock In \emph{International Conference on Learning Representations}, 2020.
\newblock URL \url{https://openreview.net/forum?id=BJgza6VtPB}.

\bibitem[Chaffin et~al.(2021)Chaffin, Claveau, and
  Kijak]{DBLP:journals/corr/abs-2109-13582}
Chaffin, A., Claveau, V., and Kijak, E.
\newblock {PPL-MCTS: Constrained Textual Generation Through
  Discriminator-Guided Decoding}.
\newblock \emph{CoRR}, abs/2109.13582, 2021.
\newblock URL \url{https://arxiv.org/abs/2109.13582}.

\bibitem[Che et~al.(2017)Che, Li, Zhang, Hjelm, Li, Song, and Bengio]{maligan}
Che, T., Li, Y., Zhang, R., Hjelm, R.~D., Li, W., Song, Y., and Bengio, Y.
\newblock Maximum-likelihood augmented discrete generative adversarial
  networks.
\newblock \emph{arXiv preprint arXiv:1702.07983}, 2017.

\bibitem[de~Masson~d'Autume et~al.(2019)de~Masson~d'Autume, Mohamed, Rosca, and
  Rae]{de2019training}
de~Masson~d'Autume, C., Mohamed, S., Rosca, M., and Rae, J.
\newblock Training language gans from scratch.
\newblock In \emph{Advances in Neural Information Processing Systems}, pp.\
  4302--4313, 2019.

\bibitem[Deng et~al.(2020)Deng, Bakhtin, Ott, Szlam, and Ranzato]{das-global}
Deng, Y., Bakhtin, A., Ott, M., Szlam, A., and Ranzato, M.
\newblock Residual energy-based models for text generation.
\newblock \emph{arXiv preprint arXiv:2004.11714}, 2020.

\bibitem[Goodfellow et~al.(2014)Goodfellow, Pouget-Abadie, Mirza, Xu,
  Warde-Farley, Ozair, Courville, and Bengio]{gan}
Goodfellow, I., Pouget-Abadie, J., Mirza, M., Xu, B., Warde-Farley, D., Ozair,
  S., Courville, A., and Bengio, Y.
\newblock Generative adversarial nets.
\newblock \emph{Advances in neural information processing systems}, 27, 2014.

\bibitem[Guo et~al.(2018)Guo, Lu, Cai, Zhang, Yu, and Wang]{guo2018long}
Guo, J., Lu, S., Cai, H., Zhang, W., Yu, Y., and Wang, J.
\newblock Long text generation via adversarial training with leaked
  information.
\newblock In \emph{Proceedings of the AAAI Conference on Artificial
  Intelligence}, volume~32, 2018.

\bibitem[Holtzman et~al.(2019)Holtzman, Buys, Du, Forbes, and
  Choi]{holtzman2019curious}
Holtzman, A., Buys, J., Du, L., Forbes, M., and Choi, Y.
\newblock The curious case of neural text degeneration.
\newblock \emph{arXiv preprint arXiv:1904.09751}, 2019.

\bibitem[Leblond et~al.(2021)Leblond, Alayrac, Sifre, Pislar, Lespiau,
  Antonoglou, Simonyan, and Vinyals]{leblond2021machine}
Leblond, R., Alayrac, J.-B., Sifre, L., Pislar, M., Lespiau, J.-B., Antonoglou,
  I., Simonyan, K., and Vinyals, O.
\newblock Machine translation decoding beyond beam search.
\newblock \emph{arXiv preprint arXiv:2104.05336}, 2021.

\bibitem[Lin(2004)]{lin2004rouge}
Lin, C.-Y.
\newblock Rouge: A package for automatic evaluation of summaries.
\newblock In \emph{Text summarization branches out}, pp.\  74--81, 2004.

\bibitem[Lu et~al.(2019)Lu, Yu, Feng, Zhu, and Zhang]{lu2019cot}
Lu, S., Yu, L., Feng, S., Zhu, Y., and Zhang, W.
\newblock Cot: Cooperative training for generative modeling of discrete data.
\newblock In \emph{International Conference on Machine Learning}, pp.\
  4164--4172. PMLR, 2019.

\bibitem[Ma \& Collins(2018)Ma and Collins]{ma2018noise}
Ma, Z. and Collins, M.
\newblock Noise contrastive estimation and negative sampling for conditional
  models: Consistency and statistical efficiency.
\newblock \emph{arXiv preprint arXiv:1809.01812}, 2018.

\bibitem[Nallapati et~al.(2016)Nallapati, Zhou, Gulcehre, Xiang,
  et~al.]{nallapati2016abstractive}
Nallapati, R., Zhou, B., Gulcehre, C., Xiang, B., et~al.
\newblock Abstractive text summarization using sequence-to-sequence rnns and
  beyond.
\newblock \emph{arXiv preprint arXiv:1602.06023}, 2016.

\bibitem[Negrinho et~al.(2018)Negrinho, Gormley, and
  Gordon]{negrinho2018learning}
Negrinho, R., Gormley, M., and Gordon, G.~J.
\newblock Learning beam search policies via imitation learning.
\newblock In \emph{Advances in Neural Information Processing Systems}, pp.\
  10652--10661, 2018.

\bibitem[Norouzi et~al.(2016)Norouzi, Bengio, Jaitly, Schuster, Wu, Schuurmans,
  et~al.]{norouzi2016reward}
Norouzi, M., Bengio, S., Jaitly, N., Schuster, M., Wu, Y., Schuurmans, D.,
  et~al.
\newblock Reward augmented maximum likelihood for neural structured prediction.
\newblock \emph{Advances In Neural Information Processing Systems},
  29:\penalty0 1723--1731, 2016.

\bibitem[Papineni et~al.(2002)Papineni, Roukos, Ward, and
  Zhu]{papineni2002bleu}
Papineni, K., Roukos, S., Ward, T., and Zhu, W.-J.
\newblock Bleu: a method for automatic evaluation of machine translation.
\newblock In \emph{Proceedings of the 40th annual meeting on association for
  computational linguistics}, pp.\  311--318. Association for Computational
  Linguistics, 2002.

\bibitem[Paulus et~al.(2017)Paulus, Xiong, and Socher]{paulus2017deep}
Paulus, R., Xiong, C., and Socher, R.
\newblock A deep reinforced model for abstractive summarization.
\newblock \emph{arXiv preprint arXiv:1705.04304}, 2017.

\bibitem[Raffel et~al.(2019)Raffel, Shazeer, Roberts, Lee, Narang, Matena,
  Zhou, Li, and Liu]{raffel2019exploring}
Raffel, C., Shazeer, N., Roberts, A., Lee, K., Narang, S., Matena, M., Zhou,
  Y., Li, W., and Liu, P.~J.
\newblock Exploring the limits of transfer learning with a unified text-to-text
  transformer.
\newblock \emph{arXiv preprint arXiv:1910.10683}, 2019.

\bibitem[Rajpurkar et~al.(2016)Rajpurkar, Zhang, Lopyrev, and
  Liang]{rajpurkar2016squad}
Rajpurkar, P., Zhang, J., Lopyrev, K., and Liang, P.
\newblock Squad: 100,000+ questions for machine comprehension of text.
\newblock In \emph{Proceedings of the 2016 Conference on Empirical Methods in
  Natural Language Processing}, pp.\  2383--2392, 2016.

\bibitem[Ranzato et~al.(2015)Ranzato, Chopra, Auli, and
  Zaremba]{ranzato2015sequence}
Ranzato, M., Chopra, S., Auli, M., and Zaremba, W.
\newblock Sequence level training with recurrent neural networks.
\newblock \emph{arXiv preprint arXiv:1511.06732}, 2015.

\bibitem[Rosin(2011)]{rosin2011multi}
Rosin, C.~D.
\newblock Multi-armed bandits with episode context.
\newblock \emph{Annals of Mathematics and Artificial Intelligence}, 61\penalty0
  (3):\penalty0 203--230, 2011.

\bibitem[Scialom et~al.(2019)Scialom, Lamprier, Piwowarski, and
  Staiano]{answers}
Scialom, T., Lamprier, S., Piwowarski, B., and Staiano, J.
\newblock Answers unite! unsupervised metrics for reinforced summarization
  models.
\newblock \emph{CoRR}, abs/1909.01610, 2019.
\newblock URL \url{http://arxiv.org/abs/1909.01610}.

\bibitem[Scialom et~al.(2020{\natexlab{a}})Scialom, Dray, Lamprier, Piwowarski,
  and Staiano]{coldGAN}
Scialom, T., Dray, P.-A., Lamprier, S., Piwowarski, B., and Staiano, J.
\newblock Coldgans: Taming language gans with cautious sampling strategies.
\newblock \emph{Advances in Neural Information Processing Systems},
  2020{\natexlab{a}}.

\bibitem[Scialom et~al.(2020{\natexlab{b}})Scialom, Dray, Lamprier, Piwowarski,
  and Staiano]{das-local}
Scialom, T., Dray, P.-A., Lamprier, S., Piwowarski, B., and Staiano, J.
\newblock Discriminative adversarial search for abstractive summarization.
\newblock \emph{arXiv preprint arXiv:2002.10375}, 2020{\natexlab{b}}.

\bibitem[Scialom et~al.(2021)Scialom, Dray, Lamprier, Piwowarski, and
  Staiano]{selfgan}
Scialom, T., Dray, P., Lamprier, S., Piwowarski, B., and Staiano, J.
\newblock To beam or not to beam: That is a question of cooperation for
  language gans.
\newblock \emph{Advances in neural information processing systems}, 2021.

\bibitem[Silver et~al.(2017)Silver, Schrittwieser, Simonyan, Antonoglou, Huang,
  Guez, Hubert, Baker, Lai, Bolton, et~al.]{silver2017mastering}
Silver, D., Schrittwieser, J., Simonyan, K., Antonoglou, I., Huang, A., Guez,
  A., Hubert, T., Baker, L., Lai, M., Bolton, A., et~al.
\newblock Mastering the game of go without human knowledge.
\newblock \emph{nature}, 550\penalty0 (7676):\penalty0 354--359, 2017.

\bibitem[Song et~al.(2020)Song, Miao, Zhou, Yu, Wang, and
  Li]{song2020improving}
Song, Y., Miao, N., Zhou, H., Yu, L., Wang, M., and Li, L.
\newblock Improving maximum likelihood training for text generation with
  density ratio estimation.
\newblock In \emph{International Conference on Artificial Intelligence and
  Statistics}, pp.\  122--132. PMLR, 2020.

\bibitem[Sutskever et~al.(2014)Sutskever, Vinyals, and
  Le]{sutskever2014sequence}
Sutskever, I., Vinyals, O., and Le, Q.~V.
\newblock Sequence to sequence learning with neural networks.
\newblock In \emph{Advances in neural information processing systems}, pp.\
  3104--3112, 2014.

\bibitem[Welleck et~al.(2019)Welleck, Kulikov, Roller, Dinan, Cho, and
  Weston]{welleck2019neural}
Welleck, S., Kulikov, I., Roller, S., Dinan, E., Cho, K., and Weston, J.
\newblock Neural text generation with unlikelihood training.
\newblock \emph{arXiv preprint arXiv:1908.04319}, 2019.

\bibitem[Williams \& Zipser(1989)Williams and Zipser]{williams1989learning}
Williams, R.~J. and Zipser, D.
\newblock A learning algorithm for continually running fully recurrent neural
  networks.
\newblock \emph{Neural computation}, 1\penalty0 (2):\penalty0 270--280, 1989.

\bibitem[Wolf et~al.(2019)Wolf, Debut, Sanh, Chaumond, Delangue, Moi, Cistac,
  Rault, Louf, Funtowicz, et~al.]{wolf2019huggingface}
Wolf, T., Debut, L., Sanh, V., Chaumond, J., Delangue, C., Moi, A., Cistac, P.,
  Rault, T., Louf, R., Funtowicz, M., et~al.
\newblock Huggingface's transformers: State-of-the-art natural language
  processing.
\newblock \emph{arXiv preprint arXiv:1910.03771}, 2019.

\bibitem[Yu et~al.(2017)Yu, Zhang, Wang, and Yu]{seqgan}
Yu, L., Zhang, W., Wang, J., and Yu, Y.
\newblock Seqgan: Sequence generative adversarial nets with policy gradient.
\newblock In \emph{Proceedings of the AAAI conference on artificial
  intelligence}, volume~31, 2017.

\bibitem[Ziebart(2010)]{ziebart2010modeling}
Ziebart, B.~D.
\newblock \emph{Modeling purposeful adaptive behavior with the principle of
  maximum causal entropy}.
\newblock PhD thesis, figshare, 2010.

\end{thebibliography}
\bibliographystyle{style}

\newpage
\appendix
\onecolumn
\section{Appendix}

\subsection{Proof for Theorem~\ref{theo1}}
\label{proof}

Let $p_d$ the data distribution that we seek at approximating, and $p_0$ be the initial generator, of same support $\cal Y$ as $p_d$ and which is not null everywhere $p_d$ is not null.  

As considered in Algorithm \ref{algo:RMLGAN}, let consider each step $t$ the learning the following discriminator optimization: 
$$D_t \leftarrow \arg\max\limits_{D \in {\cal D}} 
    \mathop{\mathbb{E}}\limits_{y \sim p_d(y)} \hspace{-0.3cm}[\log D(y)] \ +   
    \mathop{\mathbb{E}}\limits_{y \sim p_{t-1}(y)}\hspace{-0.3cm}[\log (1-D(y))]
    $$

Thus, following the proof in \cite{gan}, if $\cal D$ has enough capacity, $D_t(y)=\dfrac{p_d(y)}{p_d(y)+p_{t-1}(y)}$ for every $y \in {\cal Y}$. 

Also, at each step $t$ of Algorithm \ref{algo:RMLGAN}, we set:  
$$p_t \leftarrow \arg\min\limits_{p \in {\cal G}} KL(q_t || p)$$
With $q_t(y) \triangleq \dfrac{D_t(y) p_{t-1}(y)}{z_t}$ for each $t>0$ and all $y \in {\cal Y}$, where $z_t$ is the partition function of distribution $q_t$. 

Thus, if $\cal G$ has enough capacity and $p_t$ is sufficiently trained, we have for every $y \in {\cal Y}$ and every $t \geq 1$:
\begin{equation}
\label{eqproof1}
p_t(y) \propto D_t(y) p_{t-1}(y) = \dfrac{p_d(y) p_{t-1}(y)}{p_d(y)+p_{t-1}(y)}=\dfrac{p_d(y) }{(p_d(y)/p_{t-1}(y)) +1} \triangleq \tilde{p}_t(y) 
\end{equation}

With $z_t\triangleq \sum\limits_{y \in {\cal Y}} \tilde{p}_t(y) $, we have $p_t(y)=\dfrac{\tilde{p}_t(y)}{z_t}$.  

In the following, we consider, for all $y \in {\cal Y}$, the sequence $\hat{z}_t(y)$ defined as:
$$\hat{z}_t(y) =  \begin{cases} 
p_d(y)/p_0(y), &\text{ if } t=0; \\ 
z_t (\hat{z}_{t-1}(y) +  1), &  \forall t \geq 1.
\end{cases}$$

\begin{lemma}
\label{pt}
At every step $t$ of Algorithm \ref{algo:RMLGAN}, we have for all $y \in {\cal Y}$: 
$$
\tilde{p}_{t+1}(y) = \dfrac{p_d(y)}{\hat{z}_{t}(y)+1}$$
\end{lemma}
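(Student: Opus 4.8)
The plan is to prove the lemma by induction on $t$, reducing everything to a simple scalar recursion for the reciprocal ratio. For each $y \in {\cal Y}$ I would introduce $a_t(y) \triangleq p_d(y)/\tilde{p}_t(y)$, so that the target identity $\tilde{p}_{t+1}(y) = p_d(y)/(\hat{z}_t(y)+1)$ becomes the cleaner statement $a_{t+1}(y) = \hat{z}_t(y) + 1$. Working with $a_t$ rather than with $\tilde{p}_t$ directly turns the multiplicative structure of Eq.~(\ref{eqproof1}) into an affine recursion that matches the definition of $\hat{z}_t$ line-for-line.

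First I would derive the one-step recursion. From Eq.~(\ref{eqproof1}) applied at index $t+1$ we have $\tilde{p}_{t+1}(y) = p_d(y)/\big((p_d(y)/p_t(y)) + 1\big)$, hence $a_{t+1}(y) = (p_d(y)/p_t(y)) + 1$. Substituting the normalization $p_t(y) = \tilde{p}_t(y)/z_t$ gives $p_d(y)/p_t(y) = z_t\, p_d(y)/\tilde{p}_t(y) = z_t\, a_t(y)$, so that $a_{t+1}(y) = z_t\, a_t(y) + 1$. This is the core computation, and everything else is bookkeeping.

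Then I would run the induction. For the base case $t=0$, Eq.~(\ref{eqproof1}) with $p_t = p_0$ gives $\tilde{p}_1(y) = p_d(y)/\big((p_d(y)/p_0(y)) + 1\big)$, and since $\hat{z}_0(y) = p_d(y)/p_0(y)$ by definition this is exactly $\tilde{p}_1(y) = p_d(y)/(\hat{z}_0(y)+1)$, i.e.\ $a_1(y) = \hat{z}_0(y) + 1$. For the inductive step I fix $t \geq 1$ and assume $a_t(y) = \hat{z}_{t-1}(y) + 1$; substituting this hypothesis into the recursion yields $a_{t+1}(y) = z_t\,a_t(y) + 1 = z_t(\hat{z}_{t-1}(y) + 1) + 1 = \hat{z}_t(y) + 1$, where the last equality is precisely the defining relation $\hat{z}_t(y) = z_t(\hat{z}_{t-1}(y)+1)$ valid for $t \geq 1$. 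Inverting the ratio recovers $\tilde{p}_{t+1}(y) = p_d(y)/(\hat{z}_t(y)+1)$, which closes the induction.

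There is no genuine analytic obstacle here: the whole argument is a direct induction, and the only thing to watch carefully is consistency of indices together with the placement of the partition factor $z_t$ — in particular, that $z_t$ multiplies the whole bracket $(\hat{z}_{t-1}(y)+1)$ rather than $\hat{z}_{t-1}(y)$ alone, which is exactly what the affine recursion $a_{t+1}(y) = z_t a_t(y) + 1$ delivers. I would only note in passing that $\tilde{p}_t(y) > 0$ wherever $p_d(y) > 0$ (so that $a_t(y)$ is well defined and finite), which follows from the support assumption on $p_0$ propagated through Eq.~(\ref{eqproof1}).
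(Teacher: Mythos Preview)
Your proof is correct and follows essentially the same induction as the paper. The only difference is presentational: you introduce the auxiliary ratio $a_t(y)=p_d(y)/\tilde p_t(y)$ and derive the affine recursion $a_{t+1}=z_t a_t+1$, whereas the paper manipulates $\tilde p_{t+1}$ directly through the chain $\tilde p_{t+1}=\dfrac{p_d p_t}{p_d+p_t}=\dfrac{p_d\tilde p_t}{p_d z_t+\tilde p_t}=\dfrac{p_d}{z_t(\hat z_{t-1}+1)+1}$; both routes use exactly the same ingredients (Eq.~(\ref{eqproof1}), $p_t=\tilde p_t/z_t$, and the defining recursion for $\hat z_t$) in the same order.
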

\begin{proof}
Let consider a proof by induction. 

First consider the base case where $t=0$. From eq.(\ref{eqproof1}), we have $\tilde{p}_1(y) = \dfrac{p_d(y) }{(p_d(y)/p_{0}(y)) +1}$ and thus, $\tilde{p}_1(y) = \dfrac{p_d(y)}{ \hat{z}_{0}(y)+1}$.

Let now assume that 
$\tilde{p}_{t}(y) =\dfrac{p_d(y)}{ \hat{z}_{t-1}(y)+1}$ is true at any step $t>0$. We need to show that this relation still holds for $t+1$ to prove the lemma.

Under this assumption, starting from Eq.(5), we have: 
\begin{eqnarray*}
\tilde{p}_{t+1}&=&
\dfrac{p_d(y) p_{t}(y) }{p_d(y) + p_{t}(y)} = \dfrac{p_d(y) \tilde{p}_{t}(y) }{p_d(y) z_t + \tilde{p}_{t}(y)} = \dfrac{p_d(y) \tilde{p}_{t}(y) }{p_d(y) z_t + p_d(y) / (\hat{z}_{t-1}(y)+1)} \\ &=&
\dfrac{\tilde{p}_{t}(y) (\hat{z}_{t-1}(y)+1) }{z_t (\hat{z}_{t-1}(y)+1) + 1  } =
\dfrac{p_{d}(y)}{z_t (\hat{z}_{t-1}(y)+1) + 1  } = \dfrac{p_{d}(y)}{\hat{z}_{t}(y) + 1  } 
\end{eqnarray*}

\end{proof}

\begin{lemma}
\label{inf1}
For every step $t>1$ of Algorithm \ref{algo:RMLGAN}, $z_t<1$.
\end{lemma}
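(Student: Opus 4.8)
The plan is to bound $z_t=\sum_{y\in{\cal Y}}\tilde{p}_t(y)$ by comparing each summand directly against $p_d(y)$ and then summing, using that $p_d$ is itself a probability distribution so that $\sum_y p_d(y)=1$. Recall from Eq.~\eqref{eqproof1} that $\tilde{p}_t(y)=\frac{p_d(y)\,p_{t-1}(y)}{p_d(y)+p_{t-1}(y)}$, so everything reduces to a pointwise estimate on this harmonic-type average.

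The key elementary fact is that for any $a>0$ and $b\ge 0$ one has $\frac{ab}{a+b}<a$, since this is equivalent to $ab<a(a+b)=a^2+ab$, i.e. to $0<a^2$, which always holds. Applying it with $a=p_d(y)$ and $b=p_{t-1}(y)$ gives $\tilde{p}_t(y)<p_d(y)$ at every $y$ in the support of $p_d$, while $\tilde{p}_t(y)=0=p_d(y)$ off the support, where the numerator vanishes. This is the heart of the lemma: the averaging $\frac{ab}{a+b}$ strictly shrinks the mass wherever $p_d$ is positive. Note that I do not even need $p_{t-1}(y)>0$; the strict inequality follows from $p_d(y)>0$ alone, so the argument is insensitive to whether the support of the previous generator has been preserved, and it applies verbatim for every $t\ge 1$ (in particular for the stated range $t>1$).

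Summing over ${\cal Y}$ yields $z_t=\sum_y \tilde{p}_t(y)\le\sum_y p_d(y)=1$, and the remaining task is to upgrade this to a strict inequality. For that I consider the nonnegative gap $\sum_{y}\bigl(p_d(y)-\tilde{p}_t(y)\bigr)=1-z_t$: every term is $\ge 0$, and since $p_d$ is a genuine probability distribution there is at least one $y_0$ with $p_d(y_0)>0$, at which the gap is strictly positive; a sum of nonnegative terms containing a strictly positive one is strictly positive, so $1-z_t>0$, i.e. $z_t<1$. The only genuine subtlety — the mild ``hard part'' — is precisely this step of passing from a term-by-term strict inequality to a strict inequality for a possibly countably infinite sum, which is why I phrase it through the total gap being bounded below by a single strictly positive term rather than naively summing strict inequalities. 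As an alternative I could instead invoke Lemma~\ref{pt}, writing $\tilde{p}_t(y)=\frac{p_d(y)}{\hat{z}_{t-1}(y)+1}$ and noting that $\hat{z}_{t-1}(y)>0$ on the support (immediate by induction from $\hat{z}_0(y)=p_d(y)/p_0(y)>0$ together with $z_s>0$), so that each summand is again strictly below $p_d(y)$; the direct harmonic-mean estimate is cleaner and avoids this recursion.
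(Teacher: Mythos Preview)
Your proof is correct and takes a genuinely different route from the paper's. The paper proceeds via Lemma~\ref{pt} and the recursion defining $\hat z_t$: it rewrites
\[
z_{t+1}=\sum_y \tilde p_{t+1}(y)=\sum_y \frac{p_d(y)}{\hat z_t(y)+1}
=\sum_y p_t(y)\,\frac{\hat z_t(y)}{\hat z_t(y)+1}
=\mathbb{E}_{y\sim p_t}\!\left[\frac{\hat z_t(y)}{\hat z_t(y)+1}\right],
\]
and concludes since $\hat z_t(y)/(\hat z_t(y)+1)<1$. You instead use the pointwise harmonic-mean bound $\frac{ab}{a+b}<a$ (for $a>0$, $b\ge 0$) directly on $\tilde p_t(y)=\frac{p_d(y)p_{t-1}(y)}{p_d(y)+p_{t-1}(y)}$, compare against $p_d(y)$, and sum. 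Your argument is shorter, avoids the auxiliary sequence $\hat z_t$ and the induction in Lemma~\ref{pt}, and in fact yields the (slightly stronger) conclusion $z_t<1$ for every $t\ge 1$ rather than only $t>1$. What the paper's derivation buys, on the other hand, is an \emph{exact} expression $z_{t+1}=\mathbb{E}_{p_t}[\hat z_t/(\hat z_t+1)]$, which in principle carries more quantitative information than the bare inequality; since the subsequent argument in Appendix~\ref{proof} only uses $z_t<1$, your simpler route is entirely adequate here. Your care in passing from termwise strict inequality to a strict inequality for the sum (via the nonnegative gap $1-z_t$ containing at least one strictly positive term) is appropriate and correctly handles the possibly infinite $\mathcal{Y}$.
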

\begin{proof}
For every step $t>0$, using lemma \ref{pt} on the second and fourth equality (below), we have: 
\begin{eqnarray*}
z_{t+1} & = & \sum_{y \in {\cal Y}} \tilde{p}_{t+1}(y) = \sum_{y \in {\cal Y}} \dfrac{p_d(y)}{\hat{z}_{t}(y) +1} = \sum_{y \in {\cal Y}} \dfrac{p_d(y)}{\hat{z}_{t-1}(y) +1} \dfrac{\hat{z}_{t-1}(y) +1}{\hat{z}_{t}(y) +1} =
\sum_{y \in {\cal Y}} \tilde{p}_{t}(y) \dfrac{\hat{z}_{t-1}(y) +1}{\hat{z}_{t}(y) +1} \\
&= &
\sum_{y \in {\cal Y}} p_{t}(y) \dfrac{z_t(\hat{z}_{t-1}(y) +1)}{\hat{z}_{t}(y) +1} =
\sum_{y \in {\cal Y}} p_{t}(y) \dfrac{\hat{z}_t(y)}{\hat{z}_{t}(y) +1} = 
\mathbb{E}_{y \sim p_{t}(y)} \left[\dfrac{\hat{z}_t(y)}{\hat{z}_{t}(y) +1}\right] 
\end{eqnarray*}
Thus, since $\hat{z}_t(y)\geq 0$ for all $y \in {\cal Y}$ and all $t \geq 0$,  $z_{t+1} <1$ for all $t>0$. 

\end{proof}

Then, to prove theorem 1 (convergence of $p_t$ to $p_d$ in law), 
let us rewrite $\hat{z}_t$ (using its definition for $t>0$) as:
$$\hat{z}_t(y)=z_t (\hat{z}_{t-1}(y) +  1)=\prod_{s=1}^t z_s (\dfrac{p_d(y)}{p_0(y)}) + \sum_{s=1}^t \prod_{s'=s}^t z_s$$

For any pair $(y,y') \in {\cal Y}^2$, we thus have: 
$$\hat{z}_t(y) - \hat{z}_t(y')
=  (\dfrac{p_d(y)}{p_0(y)} - \dfrac{p_d(y')}{p_0(y')}) \prod_{s=1}^t z_s$$

Since from Lemma \ref{inf1} we know that $z_t<1$ for any $t>1$, we have: $\lim_{t \rightarrow +\infty} \prod_{s=1}^t z_s = 0$ and thus,  $\hat{z}_t(y) - \hat{z}_t(y')$ converges to $0$ for any pair $(y,y') \in {\cal Y}^2$, ensuring that $\hat{z}_t(y)$ converges to a constant $K$, which shows that
$$\tilde p_t(y) \mathop{\rightarrow}\limits_{+\infty} \frac {p_d(y)}{1+K}$$
which in turn implies our final conclusion, i.e. that $p_t$ converges in distribution to 
$p_d$.

\subsection{Proof for Theorem~\ref{theo2}}
\label{proof2}

Let us consider the case of $p_{t} \propto p_{t-1} D_{t}$, and a  discriminator sufficiently trained such that, i.e. such that for  
\begin{equation}
\label{hypotheo2}
\log \eta  = \min\left(\mathop{\mathbb{E}}\limits_{y \sim p_d(y)}[\log(D_{t}(y)], \mathop{\mathbb{E}}\limits_{y \sim p_{t-1}(y)}[\log(1-D_{t}(y))]\right)  
\end{equation}
we have $\eta \in ]\frac{1}{2};1[$

The difference of KL divergences of the target distribution $p_d$ from the generator distribution taken at two successive steps is given as: 
\begin{eqnarray*}
\Delta_t &\triangleq& KL(p_d||p_{t}) - KL(p_d||p_{t-1}) \\
&=& \mathop{\mathbb{E}}\limits_{y \sim p_d(y)}[\log(p_{t-1}(y)) - \log(p_{t}(y))] \\
&=& \mathop{\mathbb{E}}\limits_{y \sim p_d(y)}[\log(p_{t-1}(y)) - \log(p_{t-1}(y) D_t(y))] + \log(\sum_{y' \in {\cal Y}} p_{t-1}(y) D_t(y))  \\
&=& \mathop{\mathbb{E}}\limits_{y \sim p_d(y)}[- \log( D_t(y))] + \log(\sum_{y \in {\cal Y}} p_{t-1}(y) D_t(y))\\
&=&  \log(\mathop{\mathbb{E}}\limits_{y \sim  p_{t-1}(y)}[ D_t(y)]) -  \mathop{\mathbb{E}}\limits_{y \sim p_d(y)}[ \log( D_t(y))]\\
\end{eqnarray*}

From the assumption given in Eq.(\ref{hypotheo2}),  we have: \begin{eqnarray*}
\log \eta &\leq& \mathop{\mathbb{E}}\limits_{y \sim p_{t-1}(y)}[\log(1-D_{t}(y))] \\
&\leq& \log(\mathop{\mathbb{E}}\limits_{y \sim p_{t-1}(y)}[1-D_{t}(y)]) \\
\end{eqnarray*}
where the second inequality is obtained with the Jensen inequality on expectations of concave functions.  

This equivalent to: 
\begin{eqnarray*}
\log(1-\mathop{\mathbb{E}}\limits_{y \sim p_{t-1}(y)}[1-D_{t}(y)]) \leq \log(1-\eta) 
\end{eqnarray*}

And thus: $$\log(\mathop{\mathbb{E}}\limits_{y \sim  p_{t-1}(y)}[ D_t(y)]) \leq \log(1-\eta)$$

From assumption of Eq.\ref{hypotheo2}, we also know that $\mathop{\mathbb{E}}\limits_{y \sim p_d(y)}[ \log( D_t(y))] \geq \log(\eta) $. 

Thus, we have: 
$$\Delta_t \leq \log(1-\eta) - \log(\eta) = \log(\frac{1}{\eta} - 1) < 0$$
which concludes the proof.

\end{document}